\newtheorem*{remark}{Remark}
\newcommand{\R}{\ensuremath{\mathbb{R}}}
\renewcommand{\vec}[1]{\ensuremath{\mathbf{#1}}}
\newcommand{\mat}[1]{\ensuremath{\mathbf{#1}}}
\newcommand{\Xten}{\ensuremath{\bm{\mathscr{X}}}}
\newcommand{\Aten}{\ensuremath{\bm{\mathscr{A}}}}
\newcommand{\Yten}{\ensuremath{\bm{\mathscr{Y}}}}
\newcommand{\ten}[1]{\ensuremath{\bm{\mathscr{#1}}}}
\newcommand{\Xmat}{\ensuremath{\mathbf{X}}}
\newcommand{\Amat}{\ensuremath{\mathbf{A}}}
\newcommand{\Ymat}{\ensuremath{\mathbf{Y}}}
\newcommand{\uvec}[1]{\ensuremath{\mathbf{u}^{(#1)}}}
\newcommand{\U}[1]{\ensuremath{ {\mathbf{U}^{(#1)}}} }
\newcommand{\Dk}{\ensuremath{\mathbf{D}^{(k)}}}
\begin{document}

\title{
  Time-varying Autoregression with Low Rank Tensors
}
\author{
  Kameron Decker Harris\thanks{Corresponding author: 
    Computer Science \& Engineering, Biology, University of Washington, \url{kamdh@uw.edu}}
  \and
  Aleksandr Aravkin\thanks{Applied Mathematics, University of Washington, \url{saravkin@uw.edu}}
  \and
  Rajesh Rao\thanks{Computer Science \& Engineering, University of Washington, \url{rao@cs.washington.edu}}
  \and
  Bingni Wen Brunton\thanks{Biology, University of Washington, \url{bbrunton@uw.edu}}
}
\date{\today}
\maketitle

\begin{abstract}
  We present a windowed technique to learn 
  parsimonious time-varying autoregressive models from 
  multivariate timeseries.
  This unsupervised method uncovers interpretable spatiotemporal structure in data
  via non-smooth and non-convex optimization.
  In each time window, we assume
  the data follow a linear model parameterized by a system matrix,
  and we model this stack of potentially different system matrices as a low rank tensor.
  Because of its structure, the model is scalable to high-dimensional data
  and can easily incorporate priors such as smoothness over time.
  We find the components of the tensor using alternating minimization
  and prove that any stationary point of this algorithm is a local minimum.
  We demonstrate on a synthetic example that our method identifies the true rank of a switching linear
  system in the presence of noise. 
  We illustrate our model's utility and superior scalability over 
  extant methods when applied to 
  several synthetic and real-world example:
  two types of time-varying linear systems, worm behavior, sea surface temperature,
  and monkey brain datasets.
\end{abstract}

\section{Introduction}

Data-driven linear models are a common approach to modeling multivariate timeseries
and have been studied extensively in the mathematical sciences and applied fields.
These domains include
Earth and atmospheric sciences \citep{leith1978a,danforth2007,alexander2008},
fluid dynamics \citep{rowley2009,allgaier2012,budisic2012,towne2018},
and neuroscience \citep{brunton2016,markowitz2018}.
Even when the underlying dynamics are nonlinear,
such linear approximations may be justified by their
connection to the eigenfunctions of the Koopman operator
\citep{budisic2012,lusch2018},
known as the dynamic mode decomposition
\citep{rowley2009,tu2014,takeishi2017}.
In order to capture non-stationary behavior,
it is important to leverage
dynamic linear models (DLMs)
that vary over time \citep{west1997a}.
However, 
since every linear model is parametrized by a 
{\em system matrix},
and this matrix changes over time,
a naive DLM fit to a length $T$ timeseries of $N$ variables 
has $\mathcal{O}(T N^2)$ parameters, which could be very large.
We propose to manage such complexity by representing the 
stack of system matrices as a low rank tensor 
with only $\mathcal{O}(T + 2N)$ parameters
(Figure~\ref{fig:schematic}).

There is a rich and extensive literature on DLMs,
including
time-varying autoregressive (TVAR) and
switching linear dynamical systems (SLDS) models
that we cannot review in full here \citep{west1997a}.
Ours is a regularized optimization method
\citep[in the spirit of][]{ghahramani2000,chan2014,chan2015a,yu2016b,tank2017a},
complementary to the Bayesian approaches taken
in much of the literature
\citep{prado2000,fox2009,damien2013,linderman2017}.
We also would like to highlight the recent work of 
\cite{costa2019}, which uses likelihood tests to 
adaptively segment a timeseries and fit different models
to each segment.
However, our approach is inherently more scalable due to 
the low rank assumptions we make.

The key innovation of our model is to parametrize the dynamics
by a low rank tensor for
computational tractability, ease of identification, and
interpretation.
Tensor decompositions \citep{kolda2009} are a powerful technique
for summarizing multivariate data
and an area of ongoing research in theory
\citep[][among others]{ge2017,anari2018}
and applications
\citep[e.g., to improve neural networks][]{novikov2015,he2017}.
In our formulation, the system tensor representing
the DLM is regressed against the data.
In this aspect, our method is most similar to
the work of Yu and colleagues
who considered spatiotemporal forecasting
with spatial smoothness regularization
\citep{bahadori2014,yu2016,yu2018},
in contrast to our temporal smoothness,
although the possibility is mentioned in their dissertation \citep{yu2017a}.
Our work also differs in the emphasis on non-smooth regularization
to find switching or other temporally structured behavior.

\begin{figure*}[]
  \centering
  \includegraphics[width=\textwidth]{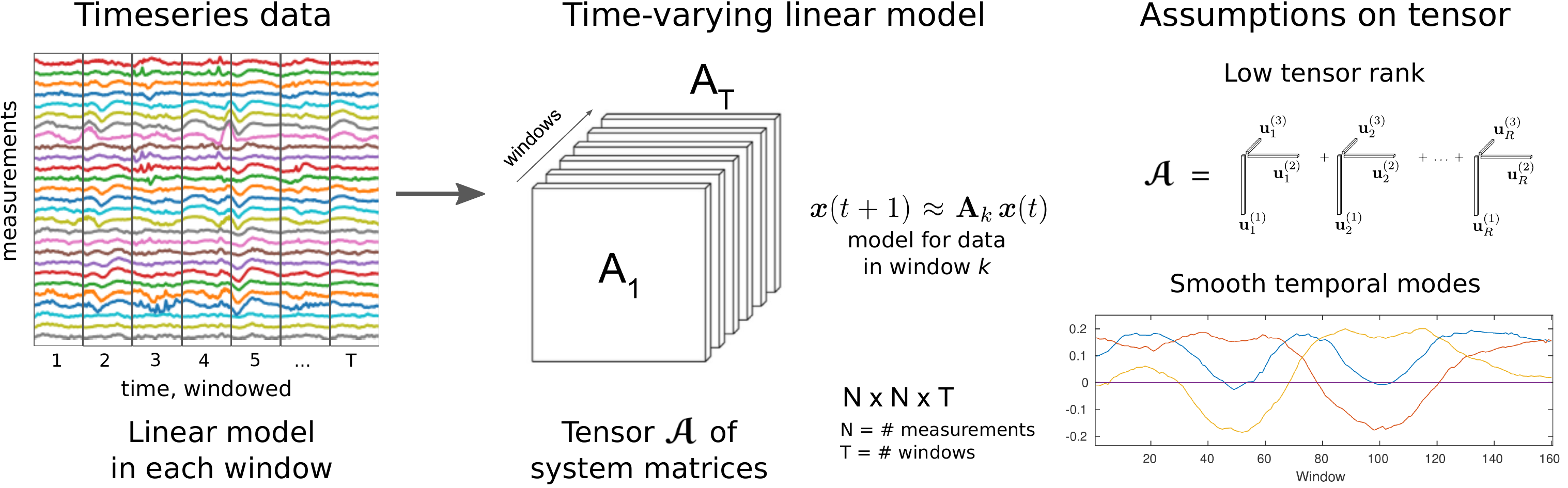}
  \caption{
    Schematic of the TVART method.
    Time series data are windowed, and a linear model is fit in each window.
    The system matrices across windows are assumed to 
    arise from a tensor that is low rank.
    This assumption couples together the different matrices in time
    and factors as a spatiotemporal modal decomposition: 
    The left and right spatial 
    modes $\U1$ and $\U2$ are a basis
    for the row and column spaces of the system matrices, 
    and the temporal modes
    $\U3$ allow their weights to change over time.
    The temporal modes shown in the bottom right correspond to the columns
    of $\U3$.
    We allow the possibility of 
    smoothing of the temporal modes in 
    order to stabilize the model fit with small window size.
  }
  \label{fig:schematic}
\end{figure*}

\subsection*{Notation}
\label{sec:notation}

We follow the conventions of
\cite{kolda2009} for notation.
In brief, tensors are denoted with calligraphic bold ($\ten{A}$), 
matrices with capital bold ($\mat{A}$), and vectors with lower-case bold symbols
($\bf a$).
MATLAB-style colon notation represents slices,
e.g.\ ${\bf A}_{i:}$ is the row vector formed from the
$i$th row of a matrix ${\bf A}$.
All of the tensors we consider are third-order.
For a tensor $\ten{A}$, let 
${\bf A}_k = {\ten A}_{::k}$ be its $k$th frontal slice.

\section{The TVART model}

\label{sec:problem_setup}

We now introduce our time-varying autoregressive model with low rank tensors 
(TVART, Figure~\ref{fig:schematic}).
Assume we have sampled the trajectory of a 
dynamical system
$\bm{x}(t) \in \R^N$ for $t = 1, \ldots, \tau+1$.
We split this trajectory into $T$ 
non-overlapping windows of length $M$,
so that $T M = \tau$.
Let $\Xten$ be the $N \times M \times T$ tensor with entries
$
(\Xten)_{ijk} = x_i ( (k-1)M + j ),
$
and similarly let $\Yten$ be a tensor of the same size with entries shifted 
by one time point,
$
(\Yten)_{ijk} = x_i ( (k-1)M + j + 1 )
$.
We call the frontal slices 
$\Xmat_k, \Ymat_k \in \R^{N \times R}$ 
the {\em snapshot matrices} for window $k$.
The first of these are
\[
\Xmat_1 = \left[ 
  \begin{array}{cccc}
    \vert & \vert & & \vert \\
    \bm{x}(1) &\bm{x}(2) &\ldots &\bm{x}(M) \\
    \vert & \vert & & \vert 
  \end{array}
\right]
\quad \mbox{and} \quad
\Ymat_1 = \left[ 
  \begin{array}{cccc}
    \vert & \vert & & \vert \\
    \bm{x}(2) &\bm{x}(3) &\ldots &\bm{x}(M+1) \\
    \vert & \vert & & \vert 
  \end{array}
\right] .
\]
The subsequent snapshots $\Xmat_k, \Ymat_k$ for $k > 1$
are each shifted by $(k-1)M$.

The goal of TVART is to fit an 
$N \times N \times T$
tensor $\Aten$ of system matrices, so that
$
\Ymat_k \approx \Amat_k \Xmat_k 
$
for $k = 1, \ldots, T$,
where $\Amat_k$ is the $k$th frontal slice of $\Aten$. 
The assumption underlying this goal is that 
$
\bm{x}(t+1) \approx \Amat(t) \, \bm{x}(t)
$
where $\Amat(t)$ is constant within a window.
This assumption motivates the least squares optimization problem below,
which is equivalent to assuming uncorrelated Gaussian errors:
\begin{equation}
  \label{eq:TVART}
  \min_{\Aten: \; \mathrm{rank} (\Aten) = R} 
  \, 
  \frac{1}{2}\sum_{k=1}^T \| \Ymat_k - \Amat_k \Xmat_k \|_F^2 \, .
  \tag{TVART}
\end{equation}
Without the rank constraint, \ref{eq:TVART}
factors into decoupled problems for each window $\Amat_k$.
In order to limit the degrees of freedom in the tensor $\Aten$,
we use a low rank formulation.
Specifically, we represent $\Aten$ using the
canonical polyadic (CP) decomposition \citep{kolda2009} of rank $R$  as 
\[
\Aten = 
\sum_{r=1}^R 
\uvec1_r \circ \uvec2_r \circ \uvec3_r
\]
in terms of the factor matrices 
$\U1 \in \R^{N \times R}$, 
$\U2 \in \R^{N \times R}$,
and 
$\U3 \in \R^{T \times R}$,
where 
\begin{equation*}
  \U{i} = 
  \left[
    \begin{array}{ccc}
      \vert & & \vert \\
      \uvec{i}_1 & \ldots & \uvec{i}_R \\
      \vert & & \vert 
  \end{array}
  \right] .
\end{equation*}
Thus, the number of parameters is reduced to $(2N + T)R$,
which is now linear in $N$ and $T$.
We can optionally normalize the factors to have unit-length columns
and capture their scalings in a vector 
$\bm{\lambda} \in \R^R$;
we consider this a postprocessing step and explicitly state when we do this.

When solving \ref{eq:TVART} for $\Aten$, 
we work directly with its frontal slices
\begin{equation}
  \label{eq:slice_modes}
  \Amat_k 
  = \U1 \Dk {\U2}^\intercal ,
  \quad \mbox{where} \quad
  \Dk = \mathrm{diag} \left( \uvec3_{k:} \right).
\end{equation}
The matrix $\Dk$ is the $R \times R$ diagonal matrix formed from the $k$th row of $\U3$ \citep{kolda2009}.

\begin{definition}
We call the matrices $\U1$, $\U2$, and $\U3$ the
{\em TVART dynamical modes}.
Specifically, we refer to $\U1$ and $\U2$ 
as the {\em left and right spatial modes},
since they determine the loadings of $\Amat_k$ 
onto the spatial dimensions/channels in the data.
The matrix $\U3$ contains the {\em temporal modes}, 
since it determines the time-variation of the 
system matrix $\Amat_k$ across windows.
\end{definition}

In some ways, \eqref{eq:slice_modes} is similar to the
singular value decomposition (SVD): 
each slice $\Amat_k$ is the product of a 
low rank matrix $\U1$, a diagonal matrix $\Dk$,
and another low rank matrix $\U2^\intercal$.
However, unlike the SVD, 
the left and right spatial modes are not orthogonal.
Let 
$\U1 = \mat{Q}^{(1)} \mat{R}^{(1)}$ and
$\U2 = \mat{Q}^{(2)} \mat{R}^{(2)}$
be the QR decompositions of the left and right spatial modes,
so that
$\Amat_k = \mat{Q}^{(1)} \mat{R}^{(1)} \Dk \mat{R}^{(2) \intercal} \mat{Q}^{(2) \intercal}$.
Thus, in order to calculate the SVD of $\Amat_k$, we would have to take the SVD of the 
$R \times R$ matrix $\mat{R}^{(1)} \Dk \mat{R}^{(2) \intercal}$.
In the CP decomposition, slices $\mat{A}_k$
and $\Amat_{k'}$ may have different left and right singular vectors,
but these singular vectors are always in the span of
$\mat{Q}^{(1)}$ and $\mat{Q}^{(2)}$.
This flexibility allows \ref{eq:TVART} 
to fit multiple linear models with different singular subspaces.

\subsection{Extensions: affine dynamics and higher-order autoregressions}

In many applications, 
the mean of the data may drift over time, 
and thus affine models of the dynamics
$\vec{x}(t+1) = \mat{A}_k \vec{x}(t) + \vec{b}_k$
are more appropriate than linear models
\citep{hirsh2019}.
We can fit an affine model of this type
within the TVART framework by appending a row
of ones to each $\mat{X}_k$
and extending $\U2$ by one row to build in a $\vec{b}_k$ term.
In this case, we have that 
$\vec{b}_k = \U1 \Dk \vec{c}$, 
where $\vec{c}$ is the extra row $\uvec2_{N+1,:}$.

Furthermore, autoregressive models of higher order are often considered,
where $\vec{x}(t+1)$ is predicted from data with $P$ lags
$\vec{x}(t), \ldots, \vec{x}(t - P + 1)$ \citep{west1997a,prado2000}.
In this case, the dimensions of $\Xten$, $\Aten$, and $\U2$ change
to $NP \times M \times T$, $N \times NP \times T$, and $NP \times R$,
respectively, but otherwise the mathematics remain equivalent.
For simplicity, we focus on just the $P = 1$ case,
but higher-order autoregressive models
are likely better-suited to certain applications.

\section{Alternating least squares algorithm}

\label{app:ALS}

We now describe in detail the optimization routine
used to solve \eqref{eq:TVART}.
Define the loss function to be
\begin{equation}
\label{eq:loss}
L \left( \U1, \U2, \U3 \right)  = 
\frac{1}{2} \sum_{k=1}^T 
\| \Ymat_k - \U1 \Dk {\U2}^\intercal \Xmat_k \|_F^2 .
\end{equation}
Minimizing this loss
is an equivalent formulation of the TVART problem.
The loss \eqref{eq:loss} is quadratic and convex in each of the 
variables $\U1, \U2$, and $\U3$, 
but it is not jointly convex in all variables.
This motivates our use of an alternating minimization algorithm
to minimize this cost;
this is also called coordinate descent or,
since the loss is quadratic, alternating least squares (ALS).
Minimizing over any one of the $\U{i}$ is a least squares
problem that can be solved by a linear matrix equation.
In the next three subsections,
we compute the gradients of $L$ with respect to each set of variables
and discuss methods for solving the normal equations $\frac{\partial L}{\partial \U{i}} = 0$.
Alternating minimization is used in our final approach, 
Algorithm~\ref{alg:alt-min}, 
which incorporates regularization.

We will use the following useful identity mutiple times:
\begin{align}
\frac{\partial}{\partial \mat{B}} 
\frac{1}{2} \left\| \mat{Y} - \mat{A} \mat{B}^\intercal \mat{C} \right\|_F^2
&=
\frac{\partial}{\partial \mat{B}} 
\frac{1}{2} \mathrm{Tr} \left[ 
\left( \mat{Y} - \mat{A} \mat{B}^\intercal \mat{C} \right)
\left( \mat{Y}^\intercal - \mat{C}^\intercal \mat{B} \mat{A}^\intercal \right)
\right]
\nonumber
\\
&=
-\mat{C} \mat{Y}^\intercal \mat{A} + 
\mat{C} \mat{C}^\intercal \mat{B} \mat{A}^\intercal \mat{A} . 
\label{eq:gradient_id}
\end{align}

\subsection{Left Spatial Modes $\U1$}

Taking partial derivatives of $L$ with respect to $\U1$,
we can employ \eqref{eq:gradient_id}
with
$
\mat{Y} \to \mat{Y}_k,
\mat{A} \to \mat{I},
\mat{C} \to \Dk \U2^\intercal \mat{X}_k
$
and
$\mat{B} \to \U1^\intercal$
and take the transpose to get
\begin{equation}
\label{eq:factor_1_ls}
\frac{\partial L}{\partial {\U1}}
=
\U1 \left( \sum_{k=1}^T 
\Dk \U2^\intercal \Xmat_k \Xmat_k^\intercal \U2 \Dk \right)
- 
\left( \sum_{k=1}^T 
\Ymat_k \Xmat_k^\intercal \U2 \Dk \right).
\end{equation}
Setting
$\frac{\partial L}{\partial {\U1}} = 0$
means we must solve
a matrix equation for $\U1$.
This requires forming an $N \times R$ matrix, 
and right multiplying by the pseudoinverse of an $R \times R$ matrix.

\subsection{Right Spatial Modes $\U2$}

We now differentiate $L$ with respect to $\U2$.
Replacing 
$
\mat{Y} \to \Ymat_k, 
\mat{A} \to \U1 \Dk, 
\mat{C} \to \Xmat_k$, and
$\mat{B} \to \U2$ 
in
\eqref{eq:gradient_id}
leads to the following:
\begin{equation}
  \label{eq:factor_2_ls}
  \frac{\partial L}{\partial {\U2}}
  = 
  \sum_{k=1}^T \Xmat_k \Xmat_k^\intercal \, \U2 \, \Dk \U1^\intercal \U1 \Dk
  -
  \sum_{k=1}^T \Xmat_k \Ymat_k^\intercal \U1 \Dk,
\end{equation}
which we can rewrite in the form
\begin{equation}
\label{eq:grad_2}
  \frac{\partial L}{\partial {\U2}}
  =
  \sum_{k=1}^T \mat{L}_k \U2 \mat{R}_k 
  - \mat{B},
\end{equation}
where 
$\mathbf{B} \in \R^{N \times R}$, 
$\mat{L}_k \in \R^{N \times N}$, and 
$\mat{R}_k \in \R^{R \times R}$.

We set $\frac{\partial L}{\partial {\U2}} = 0$,
so that \eqref{eq:grad_2} implies a Sylvester equation
for the optimal $\U2$:
\begin{equation}
\label{eq:sylv}
  \frac{\partial L}{\partial {\U2}}
  =
  0
  \iff
  \sum_{k=1}^T \mat{L}_k \U2 \mat{R}_k 
  = \mat{B} .
\end{equation}
For small $N$ and $R$, 
we can use the Kronecker product $\otimes$ and vectorization operations to 
rewrite \eqref{eq:sylv} 
as a linear vector equation
\begin{equation}
  \label{eq:sylv_vec}
  \mathrm{vec} 
  \left(
    \frac{\partial L}{\partial {\U2}}
  \right)
  = 0
  \iff
  \left( \sum_{k=1}^T \mat{R}_k^\intercal \otimes \mat{L}_k \right)
  \mathrm{vec} (\U2) 
  =  \mathrm{vec} (\mathbf{B}).
\end{equation}
This is a square $NR \times NR$ linear system in $NR$ unknowns.
However, for large $N$, it is impractical to even form this matrix.
Therefore, we solve the Sylvester equation \eqref{eq:sylv}
via the matrix-valued method of conjugate gradients (CG).
Note that we do not need to form the $N \times N$ matrices 
$\mat{L}_k = \mat{X}_k \mat{X}_k^\intercal$ 
to use matrix-free CG.

\subsection{Temporal Modes $\U3$}

Finally, we derive the gradients of $L$ with respect to $\U3$. 
The equations for $\U3$, through which we define the $\Dk$, 
are similar except in this case the unknown is a diagonal matrix.
The loss \eqref{eq:loss} with respect to $\Dk$ is decoupled across windows,
which leads to decoupled gradients as well.
Replacing 
$
\mat{Y} \to \Ymat_k, 
\mat{A} \to \U1, 
\mat{C} \to \U2^\intercal \Xmat_k$, and
$\mat{B} \to \Dk$
in
\eqref{eq:gradient_id}
leads to
\begin{align}
\frac{\partial L}{\partial {\Dk}}
&= 
\left( \U2^\intercal \Xmat_k \Xmat_k^\intercal \U2 
  \Dk
  \U1^\intercal \U1 
-
\U2^\intercal \Xmat_k \Ymat_k^\intercal \U1 \right) * \mathbf{I}
\nonumber \\
&=
\left( \mat{L}_k' \Dk \mat{R}_k' - 
  \U2^\intercal \Xmat_k \Ymat_k^\intercal \U1 \right) * \mathbf{I}, 
\label{eq:factor_3_grad}
\end{align}
where $*$ is the elementwise or Hadamard product,
which enforces the diagonal constraint on $\Dk$ and its gradient,
and
$\mat{L}_k' = \U2^\intercal \Xmat_k \Xmat_k^\intercal \U2$ and
$\mat{R}_k' = \U1^\intercal \U1$
are $R \times R$ matrices.

By Theorem 2.5 in
\cite{million2007},
we have that
\[
 \left( \mat{L}_k' \Dk \mat{R}_k' \right)_{ii} = 
 \left( (\mat{L}_k' * \mat{R}_k') \left( \uvec3_{k:} \right)^\intercal \right)_i,
\]
where we have substituted $\Dk = \mathrm{diag}(\uvec3_{k:})$.
Thus the gradient can be written as
\[
\mathrm{vecdiag}
\left(
\frac{\partial L}{\partial {\Dk}}
\right)
= 
\left(\mat{L}_k' * \mat{R}_k' \right) \left( \uvec3_{k:} \right)^\intercal 
-
\mathrm{vecdiag} \left( \U2^\intercal\Xmat_k \Ymat_k^\intercal \U1 \right),
\]
where $\mathrm{vecdiag}(\mat{M})$ is the column vector formed 
by the diagonal elements of $\mat{M}$.
Thus, the least-squares problem for $\U3$ 
involves solving an $R \times R$ system
for each row $\uvec3_{k:}$ of $\U3$ independently, $k = 1, \ldots, T$.
However, adding regularization to the temporal modes
destroys this decoupling across time windows,
which is why we resort to CG or proximal gradient approaches
in Algorithm~\ref{alg:alt-min}.

\subsection{Computational complexity of alternating least squares}

Each minimization step requires the solution of a linear matrix equation
$\frac{\partial L}{\partial \U{d}} = 0$ 
for $\U{d}$.
We now comment on the difficulty of solving the 
subproblems in $\U1$, $\U2$, and $\U3$.
The subproblem in $\U1$ is by far the easiest, 
since it involves just one $R \times R$ system solve
or pseudoinverse.
The subproblem in $\U3$ requires
$T$ decoupled $R \times R$ system solves.
The cost of this step is thus linear in $T$.
We assume that $R$ is small enough that solving for each
time window is fast.
By far the most difficult step
is the subproblem
for the right hand factors $\U2$.
This challenge is in part because we obtain a Sylvester equation 
which we might naively solve using the Kronecker reformulation
as an $NR \times NR$ square system in $NR$ unknowns.
When we avoid using Kronecker products 
and instead use matrix-free CG,
we can be assured that this will take less memory
but could take more time, depending on the condition number
of the Kronecker product matrix \eqref{eq:sylv_vec}.
However, in either case the subproblem involves 
coefficient matrices of size $N$,
whereas for the other problems these are of size $R$.

\section{Regularization}
\label{sec:regularization}



Alternating least squares is a
natural approach to the \ref{eq:TVART}
problem as formulated.
However, we have found that ALS is numerically unstable
for the switching linear test problem (Sec.~\ref{sec:test_problem}).
This instability is worst with low or zero observation noise,
but it is alleviated when larger noise added to the data.
Additive, independent noise 
adds a diagonal component to the data covariance, 
which suggests we apply Tikhonov regularization to the problem.
An additional motivation for Tikhonov regularization 
is that we do not want the entries in the matrices
$\U1$, $\U2$, and $\U3$ to become too large, 
but some might become large 
due to the scaling indeterminancy of the CP decomposition
\citep{kolda2009}.
We thus add a term 
\begin{equation*}
\frac{1}{2 \eta} \left( \| \U1 \|_F^2 + \| \U2 \|_F^2 + \| \U3 \|_F^2 \right)
\end{equation*}
to the least-squares loss.
The parameter $\eta$ controls the magnitude of this Tikhonov regularization:
larger $\eta$ is a smaller penalty, and smaller $\eta$ is a stronger penalty.
In matrix completion problems, a similar two-term regularization 
is often added and can be seen as a convex relaxation of the matrix rank.

\subsection{Temporal smoothing}

We also consider further regularization
of the temporal modes.
Recall that 
the rows of $\U3$ correspond to the loadings at different time windows.
By forcing these rows to be correlated, we keep the system matrices $\mat{A}_k$
from varying too much from window to window, a form of temporal smoothing.
The first regularizer we consider is a total variation (TV) penalty:
\begin{equation}
  \label{eq:TV}
  \mathrm{TV} (\U3) 
  = \sum_{r=1}^R \sum_{k=2}^{T} \big| u^{(3)}_{kr} - u^{(3)}_{k-1,r} \big|
  = \sum_{r=1}^R \| \mat{D} \, \uvec3_{:r} \|_1 .
\end{equation}
Matrix $\mat{D}$ is the $(T-1) \times T$ first difference matrix with free boundary conditions:
\begin{equation*}
  \mat{D} = 
  \left[
    \begin{array}{rrrrrr}
      1 & -1 &  &  & \ldots  & 0 \\
       & 1 & -1 &  &  & \vdots \\
       &  & \ddots & \ddots & & \\
      \vdots &  &  & 1 & -1 &  \\
      0 & \ldots & &  & 1 & -1
    \end{array}
  \right].
\end{equation*}
TV prefers piecewise constant time components $\uvec3_{:r}$
since it penalizes nonzero first-differences with column-wise $\ell_1$
penalty $\| \cdot \|_1$ to enforce sparsity, appropriate for an SLDS.

Alternatively, we consider a spline penalty:
\begin{equation}
  \label{eq:spline}
  \mathrm{Spline} (\U3) 
  = \frac{1}{2} \sum_{r=1}^R \sum_{k=2}^{T} \big( u^{(3)}_{kr} - u^{(3)}_{k-1,r} \big)^2
  = \frac{1}{2} \| \mat{D} \U3 \|_F^2.
\end{equation}
This linear smoother
penalizes the $\ell_2$-norm of the first derivative, leading to smoothly varying solutions.


\subsection{Regularized cost function}

We modify the problem \eqref{eq:TVART},
adding the Tikhonov and smoothing penalties 
to the loss function.
These additions result in the regularized cost function
\begin{equation}
  \label{eq:TVART*}
  C = 
  \frac{1}{2} \sum_{k=1}^T 
  \| \Ymat_k - \Amat_k \Xmat_k \|_F^2  
  + \frac{1}{2 \eta} \left( \| \U1 \|_F^2 + \| \U2 \|_F^2 + \| \U3 \|_F^2 \right)
  +
  \beta \,
  \mathcal{R}( \U3 ),
  \tag{TVART*}
\end{equation}
where $\Amat_k$ follows equation \eqref{eq:slice_modes} as before.
Here,
$\mathcal{R} (\cdot)$ is either TV$(\cdot)$ or
Spline$(\cdot)$.
Increasing the temporal smoothing strength $\beta$ 
leads to stronger regularization, as does decreasing $\eta$.

\subsection{Alternating minimization algorithm}

\begin{algorithm}[t!]
\caption{Alternating minimization for \ref{eq:TVART*}}
\label{alg:alt-min}
\begin{algorithmic}[1]
\STATE{initialize 
$\U1, \U2, \U3$ and
regularization parameters
$0 < \eta < \infty$,
$0 \leq \beta < \infty$ }
\REPEAT
\STATE{ 
$\U1 \leftarrow
\arg \min_{\mat{U}} 
C \left( \mat{U}, \U2, \U3 \right)$ 
\hfill $R \times R$ linear system solve}
\STATE{
$\U2 \leftarrow
\arg \min_{\mat{U}} 
C \left( \U1, \mat{U}, \U3 \right)$ 
\hfill
conjugate gradient}
\STATE{
$\U3 \leftarrow
\arg \min_{\mat{U}} 
C \left( \U1, \U2, \mat{U} \right)$
\hfill
conjugate gradient (Spline)\\\hfill or proximal gradient (TV)}
\UNTIL{convergence criteria}
\RETURN{$\U1, \U2, \U3$}
\end{algorithmic}
\end{algorithm}

We solve the regularized problem \ref{eq:TVART*} 
using alternating minimization,
also known as block coordinate descent,
detailed in Algorithm~\ref{alg:alt-min}.
The subroutines that minimize for $\U1, \U2$ and $\U3$
require different approaches.
Since the objective \ref{eq:TVART*} is quadratic
in $\U1$ and $\U2$,
we find these by solving a linear matrix equation
either directly or using CG;
CG works best for solving the Sylvester equation in $\U2$.
For $\U3$ with the Spline penalty, the cost is again quadratic so we also use CG.
However, the TV penalty is convex but not smooth, so in this case we use
the proximal gradient method with Nesterov acceleration \citep{beck2009}.

Algorithm~\ref{alg:alt-min}
has similar complexity to ALS for the unregularized problem.
However, in the regularized case the cost is dominated by 
the minimization over $\U3$,
which requires computing the proximal operator
at each step of the proximal gradient method.

The following theorem proves that when Algorithm~\ref{alg:alt-min} converges, 
it converges to a local minimum of the cost:
\begin{theorem}[Convergence of alternating minimization]
  \label{thm:convergence} 
  For a convex regularizer $\mathcal{R}$,
  the sequence of iterates generated by
  Algorithm~\ref{alg:alt-min} is defined and bounded, and every cluster point is a 
  coordinatewise minimum, i.e.\ a Nash point,
  of the regularized cost \ref{eq:TVART*}.
\end{theorem}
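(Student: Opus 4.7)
The plan is to verify three ingredients in turn: the iterates are well defined, they remain bounded, and any cluster point satisfies the block-wise optimality that defines a Nash point.

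First I would argue that each subproblem in Algorithm~\ref{alg:alt-min} admits a unique minimizer. The cost \ref{eq:TVART*} restricted to any one of $\U1$, $\U2$, or $\U3$ with the other two held fixed is the sum of a convex quadratic (the least-squares fit), the Tikhonov term $\tfrac{1}{2\eta}\|\U{i}\|_F^2$ for that block, and (only in the $\U3$ update) the convex regularizer $\beta\mathcal{R}$. Since the Tikhonov contribution alone is $1/\eta$-strongly convex, each block subproblem is strongly convex and therefore has a unique minimizer. This guarantees the iterate sequence $(\U1_t, \U2_t, \U3_t)$ is well defined.

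Second I would establish boundedness by combining monotone descent with coercivity. Because each step of Algorithm~\ref{alg:alt-min} performs an exact minimization in one block, the cost is non-increasing along the sequence, so $C_t \le C_0$ for every $t$. The Tikhonov terms give the pointwise lower bound $C \ge \tfrac{1}{2\eta}\bigl(\|\U1\|_F^2 + \|\U2\|_F^2 + \|\U3\|_F^2\bigr)$, and the least-squares fit and $\mathcal{R}$ are nonnegative, so the sublevel set $\{C \le C_0\}$ is bounded. Consequently the iterates lie in a compact set and at least one cluster point exists.

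Third, to show that every cluster point is a coordinatewise minimum, I would invoke a classical block coordinate descent convergence theorem: either the smooth result of Grippo and Sciandrone in the case of the spline penalty, or Tseng's 2001 extension covering nondifferentiable objectives whose non-smooth part is separable across blocks, which covers general convex $\mathcal{R}$ including TV. The non-smooth term $\beta\mathcal{R}(\U3)$ depends only on the third block, which is exactly the block-separability hypothesis, and the strong convexity established above provides the required uniqueness of each per-block minimizer. The argument is to pass to a subsequence converging to a cluster point $(\U1^*, \U2^*, \U3^*)$, use the per-block optimality at each iterate, and pass to the limit using continuity of the smooth terms together with convexity and lower semicontinuity of $\mathcal{R}$.

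I expect the main obstacle to be handling the non-smoothness of $\mathcal{R}$ (for example the TV penalty) when passing the optimality conditions to the limit, since the smooth BCD argument does not cover subdifferentials directly. Block-separability in $\U3$, together with outer semicontinuity of the convex subdifferential of $\mathcal{R}$, is precisely what makes the subdifferential optimality $0 \in \partial_{\U3} C(\U1^*, \U2^*, \U3^*)$ survive the limit and, combined with the smooth first-order conditions for the $\U1$ and $\U2$ blocks, yield a Nash point.
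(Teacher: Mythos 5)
Your proposal is correct and follows essentially the same route as the paper: split the cost into a smooth part (loss plus Tikhonov) and a block-separable convex non-smooth part $\beta\mathcal{R}(\U3)$, use the Tikhonov terms to bound the sublevel set and hence the iterates, and invoke Tseng's 2001 block coordinate descent theorem to conclude every cluster point is a coordinatewise (Nash) minimum. The extra details you supply (strong convexity giving unique block minimizers, monotone descent, the Grippo--Sciandrone alternative for the smooth penalty) are consistent elaborations rather than a different argument.
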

\begin{proof}
  We use the framework of \cite[][Theorem 5.1]{tseng2001}
  for cyclic block coordinate descent, 
  of which Algorithm~\ref{alg:alt-min} is an example.
  This theorem requires objective functions with
  convex and lower semicontinuous blocks
  as well as bounded level sets.
  We split the cost into smooth and non-smooth parts
  \begin{equation*}
    \label{eq:cost_split}
    C (\U1, \U2, \U3) =
    f_0 (\U1, \U2, \U3) + 
    f_1 (\U3) ,
  \end{equation*}
  where 
  $f_1 (\U3) = \beta \mathcal{R}(\U3)$
  and $f_0$ contains the remaining loss and Tikhonov terms.
  The function $f_0$ is continuous and differentiable, 
  and it is $\frac{1}{\eta}$-strongly convex in each of its blocks
  $\U1, \U2$, and $\U3$. 
  However, $f_0$ is not a convex function.
  Also, $f_1$ is convex and continuous.
  Let $(\mat{U}_0^{(1)}, \mat{U}_0^{(2)}, \mat{U}_0^{(3)})$
  be the initialization
  and $a = C(\mat{U}_0^{(1)}, \mat{U}_0^{(2)}, \mat{U}_0^{(3)})$.
  Denote the level set
  \[
  S_a = \{(\U1, \U2, \U3): C (\U1, \U2, \U3) \leq a \}.
  \]
  Then, since
  $
  C(\U1, \U2, \U3) \geq 
  \frac{1}{2\eta} 
  \left( \| \U1 \|_F^2 + \| \U2 \|_F^2 + \| \U3 \|_F^2 \right)$
  and
  the ball 
  $B_0(r) = \{x : \| \U1 \|_F^2 + \| \U2 \|_F^2 + \| \U3 \|_F^2 \leq r \}$
  is bounded,
  we can conclude that the level set
  $S_a \subseteq B_0(2\eta a)$
  is also bounded.
  Then by \cite[][Theorem 5.1]{tseng2001}, we obtain the result.
\end{proof}
\begin{remark}
  \label{remask_instability}
  We did not use any structure of $\mathcal{R}$ besides convexity 
  and lower semicontinuity,
  thus the same convergence results hold for 
  other regularizations with those properties.
  However, we did need to use the Tikhonov penalty to ensure
  that the level sets are bounded.
\end{remark}



\subsection{Implementation details}
\label{sec:alg_details}

The code and instructions for running it are available
from 
\url{https://github.com/kharris/tvart}.
We implemented Algorithm~\ref{alg:alt-min} in MATLAB. 
It was run on an 
Intel(R) Xeon(R) CPU E5-2620 v4 @ 2.10GHz, 1200 MHz
with 32 cores and 128 GB RAM
using Ubuntu 16.04.1 with Linux kernel 4.15.0-48-generic and 
64-bit MATLAB R2017b.
The CG and prox-gradient subroutines are limited to 24 and 40 iterates,
respectively.
Proximal gradient uses a backtracking line search to find the step size
and the proximal operator of TV is evaluating using 
{\tt prox\_tv1d} from UNLocBox \citep{perraudin2014}, and
this is typically the bottleneck step for large problems.
All hyperparameter tuning (for $R$, $M$, $\eta$, and $\beta$) was performed manually.

For initialization we use the following method.
We consider the entire timeseries
in two snapshot matrices,
\begin{equation*}
\Xmat = \left[ 
  \begin{array}{cccc}
    \vert & \vert & & \vert \\
    \bm{x}(1) &\bm{x}(2) &\ldots &\bm{x}(T M) \\
    \vert & \vert & & \vert 
  \end{array}
\right]
\quad \mbox{and} \quad
\Ymat = \left[ 
  \begin{array}{cccc}
    \vert & \vert & & \vert \\
    \bm{x}(2) &\bm{x}(3) &\ldots &\bm{x}(TM+1) \\
    \vert & \vert & & \vert 
  \end{array}
\right] .
\end{equation*}
We then fit a single linear model to the timeseries
by 
$\Amat = \Ymat \Xmat^\dagger$ and form the matrices
$[ \mat{U}, \mat{S}, \mat{V} ] = \mathrm{svd}(\Amat)$.
When $R = N$, the TVART modes are then initialized to
$\mat{U}_0^{(1)} = \mat{U}$,
$\mat{U}_0^{(2)} = \mat{V}$,
and
$\mat{U}_0^{(3)} = \vec{1}_T \vec{1}_R^\intercal/\sqrt{T}$ is
the matrix of all-ones with columns normalized.
If $R < N$, we truncate the smaller singular vectors,
and when $R > N$, we add columns equal to 
$\vec{1}_N / \sqrt{N}$ to $\mat{U}_0^{(1)}$ and $\mat{U}_0^{(2)}$
and $\vec{1}_T/\sqrt{T}$ to $\mat{U}_0^{(3)}$.
Initializations with unequal columns appear to help speed up the initial phase
of the optimization, as opposed to all-ones.
For this reason, we add Gaussian noise to the initializations
with standard deviation $0.5 / \sqrt{N}$ to $\mat{U}_0^{(1)}$ and $\mat{U}_0^{(2)}$
and with standard deviation $0.5 / \sqrt{T}$ to $\mat{U}_0^{(3)}$.
Initializing to zeros is not appropriate since the gradients in that case
are always zero.
All of the results we present do not depend strongly on the realization of the noise.

For stopping criteria, 
we use both a relative and absolute tolerance
on the decrease in the cost function.
Let $C_t$ be the cost of \ref{eq:TVART*} at iterate $t$.
Then, we stop if either
\begin{equation}
  \label{eq:convergence}
  \frac{|C_t - C_{t-1}|}{C_{t-1}} < {\tt rtol} \qquad {\rm or} \qquad |C_t - C_{t-1}| < {\tt atol}.
\end{equation}
Unless specified otherwise, {\tt rtol} = $10^{-4}$ and {\tt atol} = $10^{-6}$.
In all cases we have tested, the relative tolerance is acheived first.
We report the cost \ref{eq:TVART*} as it runs
as well as the root mean square error (RMSE), 
which we define as:
\begin{equation}
  \label{eq:rmse}
  {\rm RMSE} = \sqrt{ \frac{1}{N M T} \sum_{k=1}^T \| \mat{Y}_k - \mat{A}_k \mat{X}_k \|_F^2} \, .
\end{equation}
The RMSE is normalized 
so that it gives a measure of average one-step prediction error per channel.
This ``goodness of fit'' metric 
can then be compared to the standard deviation of the data.







\section{Example applications}

In this section we test our method on both synthetic data and real-world datasets.
With synthetic data generated by a switching or smoothly varying linear dynamical system, 
we show that TVART can recover
the true dynamics and is competitive with other state-of-the-art techniques.
In real-world data examples, we highlight how the recovered modes
are interpretable and can correspond to 
important dynamical regimes.
For a complete list of the parameters used, please refer to Appendix~\ref{app:examples}.

\subsection{Test problem 1: switching low rank linear system}
\label{sec:test_problem}

\begin{figure}[t!]
  \centering
  \includegraphics[width=\linewidth]{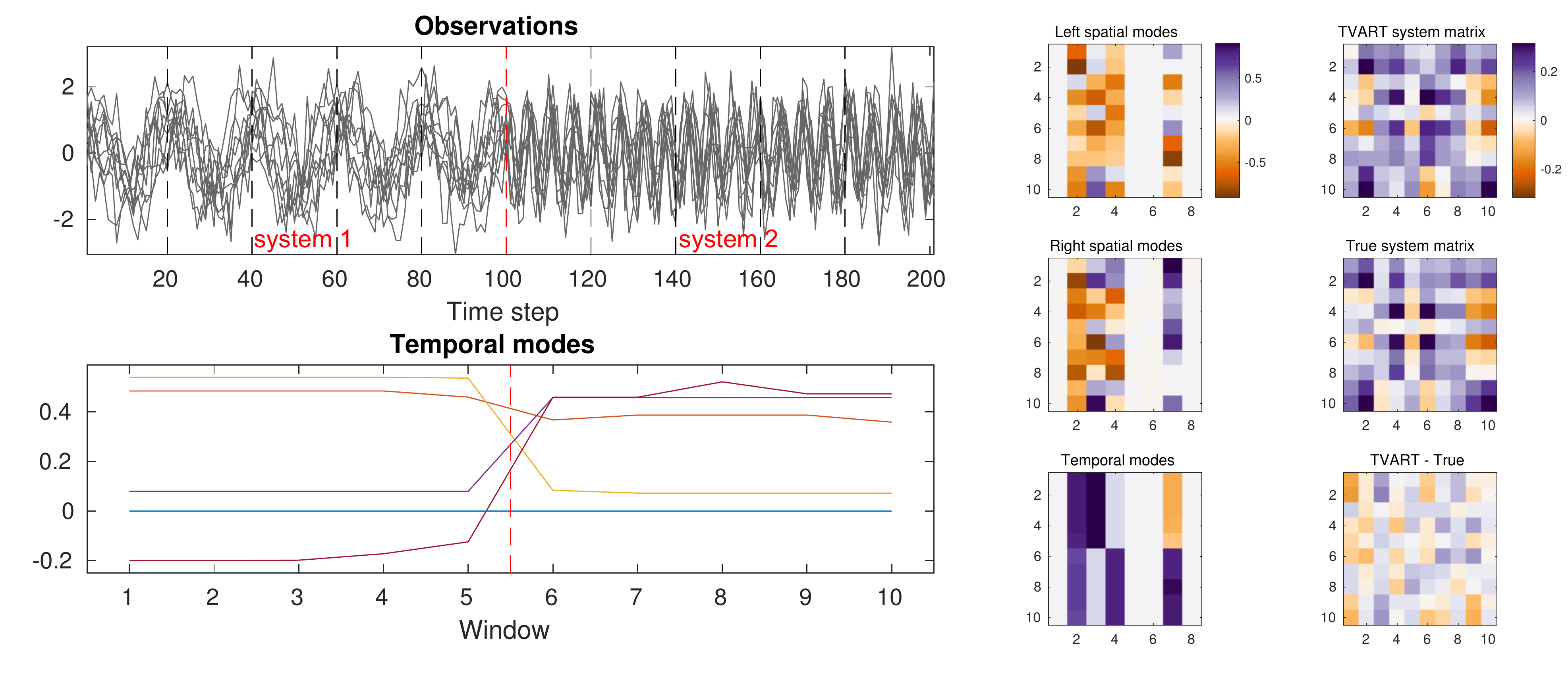}
  \caption{
    TVART correctly recovers the system matrices in a switching linear test problem. 
    {\bf (Left, top)} Noisy observations input into the algorithm,
    where each line is one spatial measurement in time.
    Dashed lines denote the windows.
    {\bf (Left, bottom)}
    The temporal modes from TVART per window, 
    which clearly pick out the change point at window 6.
    {\bf (Center)} We show the left and right spatial modes and
    the temporal modes output by TVART on the switching linear test case.
    Only four components are significantly different from zero.
    {\bf (Right)} 
    In the top, we show the TVART estimate of the system matrix $\mat{A}_1$,
    the middle shows the truth, and the bottom shows their difference,
    which is relatively small. Colorbars are shared in the center and right columns.
  }
  \label{fig:test_case}
\end{figure}

\begin{figure}[t]
  \centering
  \includegraphics[width=0.6\linewidth]{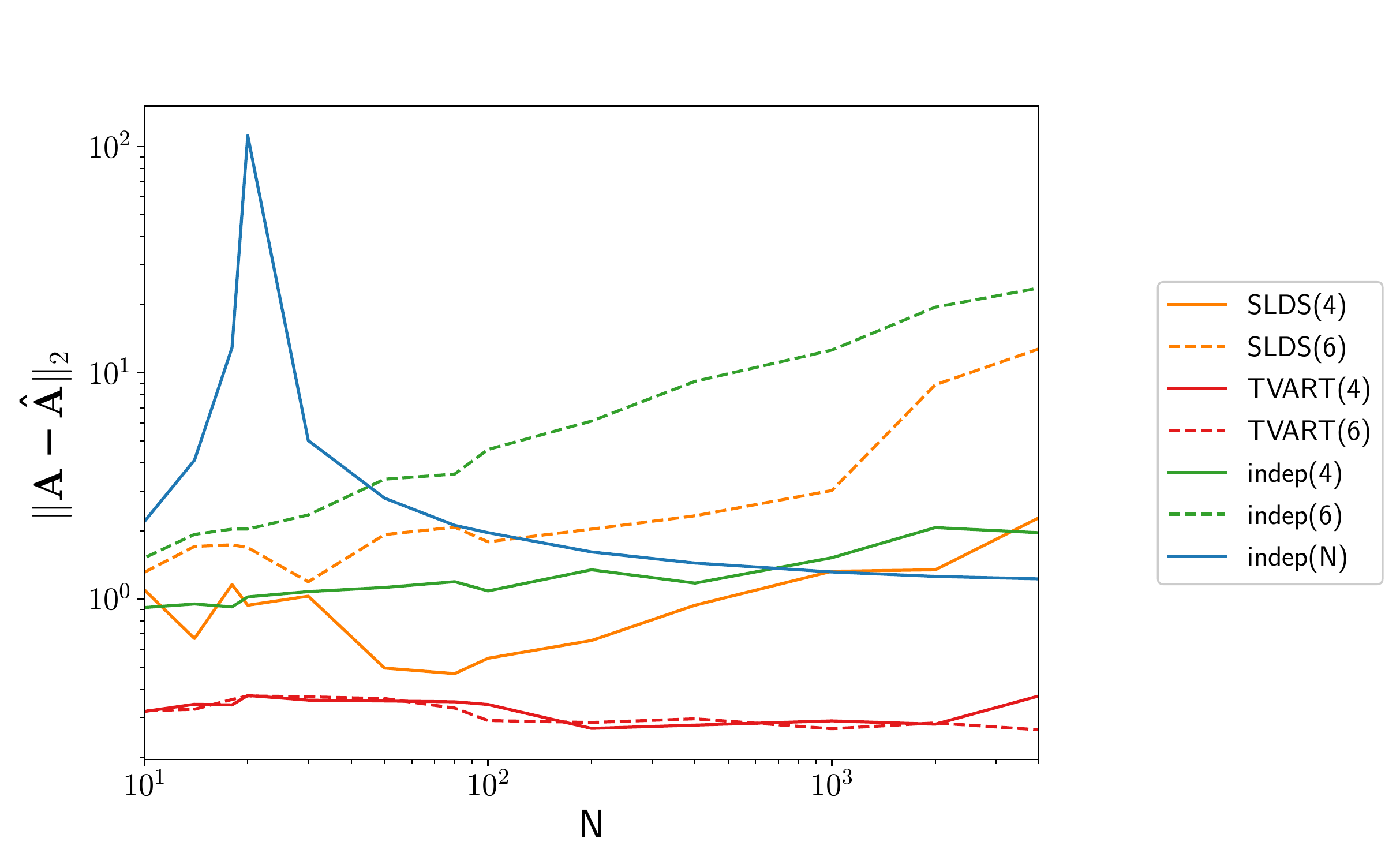}
  \caption{TVART outperforms other methods on the switching test problem, for varying system size $N$.
  Average error of the inferred system matrix $\hat{\mat{A}}$ versus the true
  $\mat{A}$ measured in operator norm.
}
  \label{fig:test_comparison}
\end{figure}

\subsubsection{Model and data generation}

We first apply TVART to a simple test case 
where the true model is a low rank,
switching linear system.
We generate two $N \times N$ system matrices
$\mat{A}_1$ and $\mat{A}_2$,
which are random, rank-2 rotation matrices.
For the first half of our timeseries, the dynamics follow
$\mat{A}_1$, and then they switch to $\mat{A}_2$.
Specifically, we form the matrices 
$\mat{A}_i$ for $i=1, 2$ by the following process:
\begin{enumerate}
\item Generate a $2 \times 2$ rotation matrix
  $\mat{A}_i' = \left(
    \begin{array}{lr}
      \cos(\theta_i) & -\sin(\theta_i) \\
      \sin(\theta_i) & \cos(\theta_i)
    \end{array}
  \right)$.
\item Draw a random matrix $\mat{Z}_i \in \R^{N \times 2}$, 
  with standard Gaussian entries.
\item Take the SVD: 
  $[\mat{W}_i, \mat{S}_i, \mat{V}_i] = \mathrm{svd} (\mat{Z}_i) $.
\item Form a larger rotation matrix by projecting with onto the 
  left singular vectors: 
  $\mat{A}_i = \mat{W}_i \mat{A}_i' \mat{W}_i^\intercal$.
\end{enumerate}
Then, $\mat{A}_i$ is a random rotation matrix of rank 2. 
We use $\theta_1 = 0.1\pi$ and $\theta_2 = 0.37 \pi$.

The trajectory is generated by starting with the initial condition 
$\vec{x}(1) = \vec{1}$
and iterating 200 steps with $\mat{A}_1$ to remove a transient.
After the transient, the vector is renormalized to have $\|\vec{x}(t)\| = \sqrt{N}$.
For the first half of the timeseries $t < \tau/2$, 
we iterate $\vec{x}(t+1) = \mat{A}_1 \, \vec{x}(t)$,
while for the second half $t \geq \tau/2$ 
we iterate $\vec{x}(t+1) = \mat{A}_2 \, \vec{x}(t)$.
After the first application of $\mat{A}_2$, 
we again rescale $\vec{x}(t)$ to have norm $\sqrt{N}$.

After generating the noiseless trajectory, 
we add independent, Gaussian observation noise
with standard deviation $\sigma$ to each entry of $\vec{x}(t)$.
Since the noise vector norm scales as $\sqrt{N}$, rescaling the signal vector by the 
same factor allows us to maintain the same signal-to-noise ratio across different 
system sizes $N$.
The RMSE of a model that is not overfit should be approximately $\sigma$ for any $N$.

\subsubsection{Performance of TVART}

We present results in 
Figure~\ref{fig:test_case} for $N = 10$ and $\sigma=0.5$.
The TVART algorithm is run with 
$M=20$, $R = 8$, $\eta = 1/N$,
$\beta = 5$, and TV regularization.
Algorithm~\ref{alg:alt-min} converges in 30 iterations
with an RMSE of 0.554, which is close 
to the noise floor $\sigma = 0.5$.
The temporal modes are stable for the first 5
windows, when system $\mat{A}_1$ is active,
then switch to a different state for the remaining windows
where $\mat{A}_2$ is active.
Thus just from the temporal modes, we can determine the change point
to the resolution of the window size.
Examining the TVART output in more detail,
only the first four 
spatial and temporal modes are significantly different from 0.
Remarkably, TVART is able to discover the true rank of the system;
this is 4 because $\mat{A}_1$ and $\mat{A}_2$ are each rank 2.
Furthermore, we compare the TVART reconstruction of $\mat{A}_1$
to the truth; the reconstruction matches the truth very closely
(Figure~\ref{fig:test_case}).

We now describe some other behaviors that have been observed
in this test problem for parameters that are not shown.
When the noise $\sigma$ is very small or zero,
the Tikhonov regularization term is important.
If we eliminate this term by taking $\eta \to \infty$,
the ALS subproblems become ill-conditioned and lead to poor results.
Without the Tikhonov penalty,
the true rank of the system is also less obvious---the 
entries which are nearly 0 in Figure~\ref{fig:test_case} are noisier---whereas
the penalty shrinks these noisy entries.
The TV regularization, in contrast,
is important for selecting a switching solution under moderate to large noise.

\subsubsection{Comparison to other methods}

We compared TVART method to two alternative approaches: 
independent windowed fits using multivariate regression,
for a simple baseline,
as well as a Bayesian SLDS fit using the 
{\tt ssm} package available from
\url{https://github.com/slinderman/ssm}.

The independent model is tested with full rank and
rank truncation to 4 or 6, using the SVD.
To fit a rank $R<N$ model on the $k$th window of data,
we first perform the SVD on the entire $k$th timeseries,
fit a linear model for the $R$ leading temporal modes,
and project back into the $N$-dimensional space using the spatial modes.

The SLDS fit was allowed 6000 iterations and
performed with either a 4 or 6 dimensional latent space
with the maximum number of switches parameter set to {\tt Kmax} = 4
and otherwise default parameter values.

For TVART we use a rank of either 4 or 6 and the parameters
$M=20, \eta = 1/N$, $\beta = 1$, and TV regularization.
Recall that the true rank of the system is 4.

In Figure~\ref{fig:test_comparison},
we show the results as we sweep across system sizes
$N$ from 10 to 4000.
We depict the error in the reconstruction
$\| \mat{A} - \hat{\mat{A}} \|_2$ averaged across all time windows.
Note that if we measure the entrywise error in the reconstruction,
this decreases with $N$, whereas the operator norm error is constant;
the relative trends remain the same in any norm.
We see that TVART of either rank is able to recover the true dynamics $\mat{A}$ 
better than any of the other methods.
SLDS performs worse but slightly better than independent for rank 4,
rank 6 is similar,
although both degrades for higher system sizes $N$.
TVART is also much faster than SLDS, by roughly an order of magnitude.

These results are with the current version of \texttt{ssm} with the default settings.
A previous version of this package with similar settings did perform better
(results are shown in an earlier version of this paper, available at
\url{https://arxiv.org/abs/1905.08389v1}).
However, in that case, TVART was still as good or better than the alternative methods
while running much faster.
We conclude that TVART with TV regularization
is a scalable, alternative way of finding low rank switching linear dynamics.

\subsection{Test problem 2: smoothly varying low rank linear system}

\subsubsection{Model and data generation}

Another comparison was made by assuming rank 2 orthogonal dynamics that are {\em smooth}.
In the previous switching linear example, the dynamics were generated from 
a rotation matrix for two different angles $\theta_1$ and $\theta_2$.
In this case, the angle
$\theta(t)$ comes from a smooth Gaussian process 
in order to have slowly-varying rank 2 dynamics.
In this case, the orthogonal projection into the higher space $\mat{W}$ is 
fixed for all time.

In detail, we form a different system matrix at each time step $t$ by
\begin{equation*}
  \mat{A}(t) = 
  \mat{W}  
  \left(
    \begin{array}{lr}
      \cos(\theta(t)) & -\sin(\theta(t)) \\
      \sin(\theta(t)) & \cos(\theta(t))
    \end{array}
  \right)
   \mat{W}^\intercal ,
\end{equation*}
where the random basis $\mat{W}$ is drawn as before.
The angle $\theta(t)$ is from a centered Gaussian process
with covariance function
\begin{equation*}
  K(t,t') = 
  \exp \left[
    -\left(\frac{t - t'}{30}\right)^2
  \right]
  +
  0.001 \,\delta_{t,t'} .
\end{equation*}
This squared exponential covariance produces smooth trajectories $\theta(t)$
that are approximately constant over the timescale of $\sim 30$ time steps.
The small diagonal covariance ensures numerical stability of the Cholesky decomposition.
We add Gaussian observation noise with standard deviation $\sigma = 0.2$.

\subsubsection{Results}

We depict the data for a realization with $N=10$ in Figure~\ref{fig:test_case_2}.
This includes the angle $\theta(t)$ drawn from the Gaussian process and
observations of the dynamical system output.
It is clear that the frequency of oscillations in the state variables
are stable over short timescales, but that the frequency is modulated as $\theta(t)$ changes.

TVART is applied to these data with $R=4$, 
$M = 1$,
$\eta = 6 / N$,
and $\beta = 600 \, \log_{10}^2 (N)$ with the Spline regularization.
The temporal modes are shown as well as a comparison of
$\mat{A}(t)$ versus the inferred $\hat{\mat{A}}(t)$
at $t=1$.
Interestingly, with these values of regularization, 
we can fit an accurate model at every time point without overfitting.
The recovered system is effectively rank 3 selected by the Tikhonov penalty.

We also tested the method for other values of $N$ varying between 6 and 4000.
The results are not shown, but the performance of TVART is stable
across system sizes, with $\| \mat{A} - \hat{\mat{A}} \|_2 \approx 0.1$.
Independent fits perform significantly worse, and SLDS is even poorer.
This is not surprising, since an independent model is sure to overfit while
the SLDS model is inherently not smooth and thus highly biased.

\begin{figure}[t!]
  \centering
  \includegraphics[width=0.59\linewidth,trim={0cm .5cm 0cm .2cm},clip]{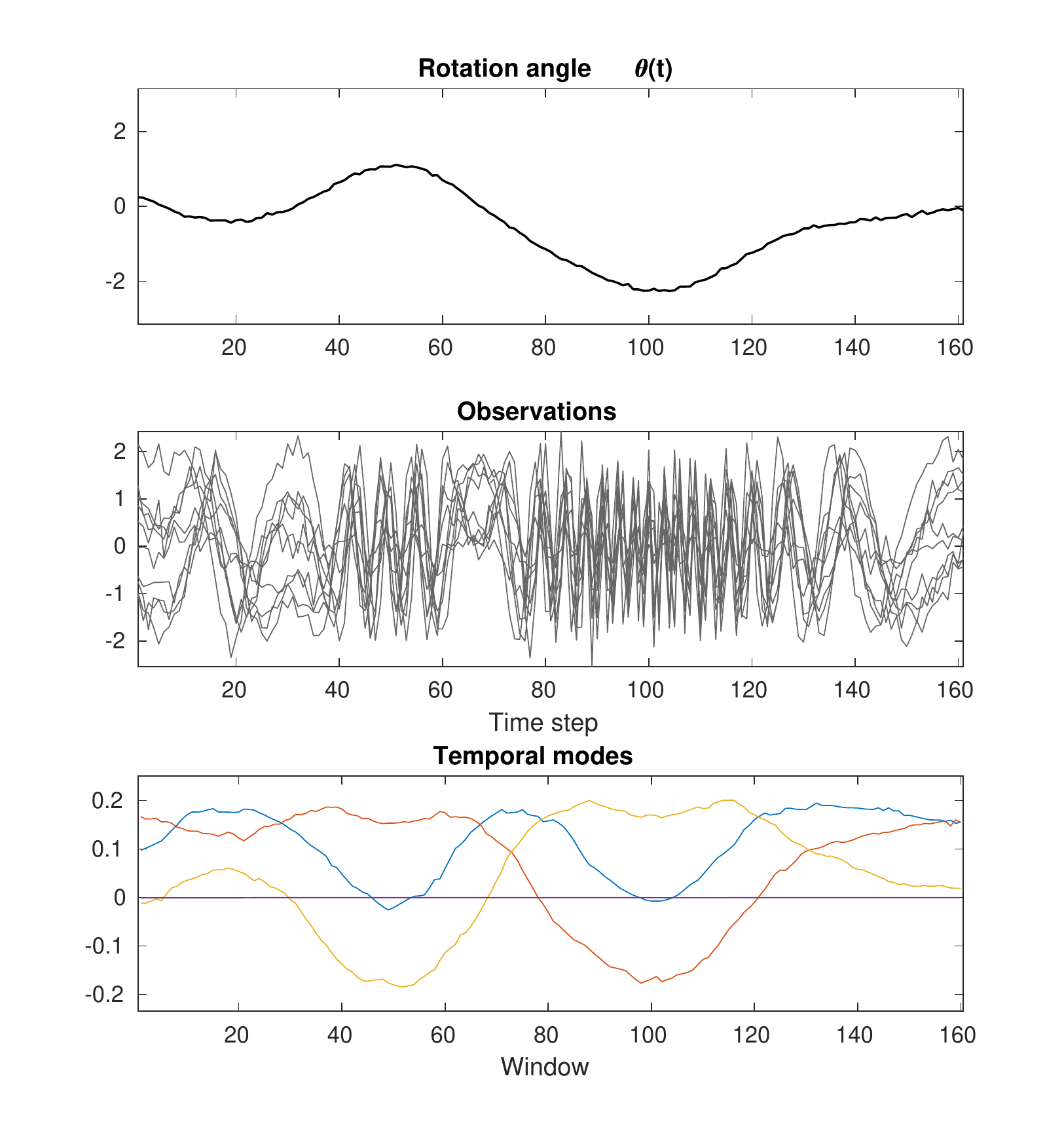}
  \hspace{.5cm}
  \includegraphics[width=0.25\linewidth]{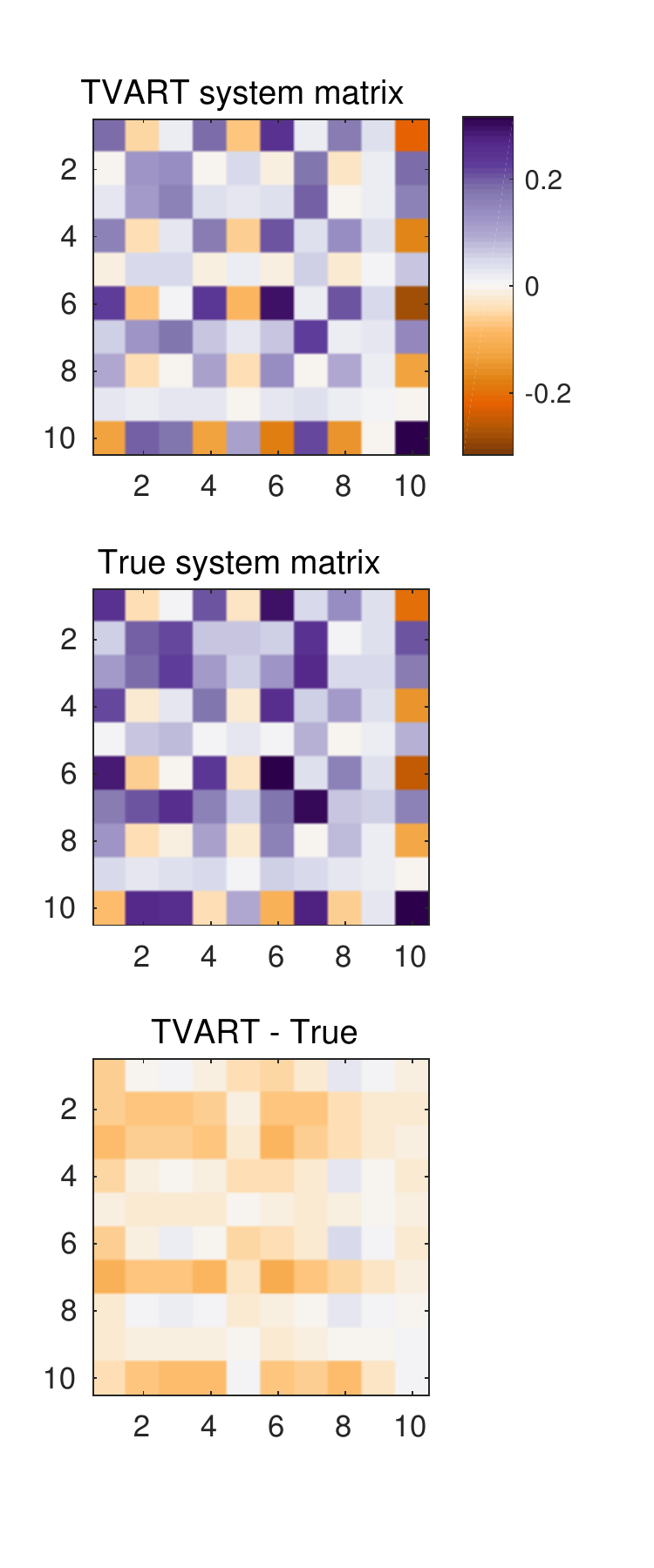}
  \caption{
    Smoothly varying linear test case. 
    {\bf (Left, top)} The rotation angle trajectory $\theta(t)$ drawn from 
    the Gaussian process.
    {\bf (Left, middle)} State variable observations for an example fit with $N = 10$.
    {\bf (Left, bottom)} The temporal fit with the parameters described in the text. 
    For the strong temporal regularization used,
    we can fit an accurate model at every time point.
    {\bf (Right)} Comparing the inferred matrix and the truth for the first timestep
    $t = 1$.}
  \label{fig:test_case_2}
\end{figure}

\subsection{Dataset 1: worm behavior}

\begin{figure}[t]
  \centering
  \includegraphics[width=.6\linewidth,trim={1.2cm 1cm 1.3cm 1cm},clip]{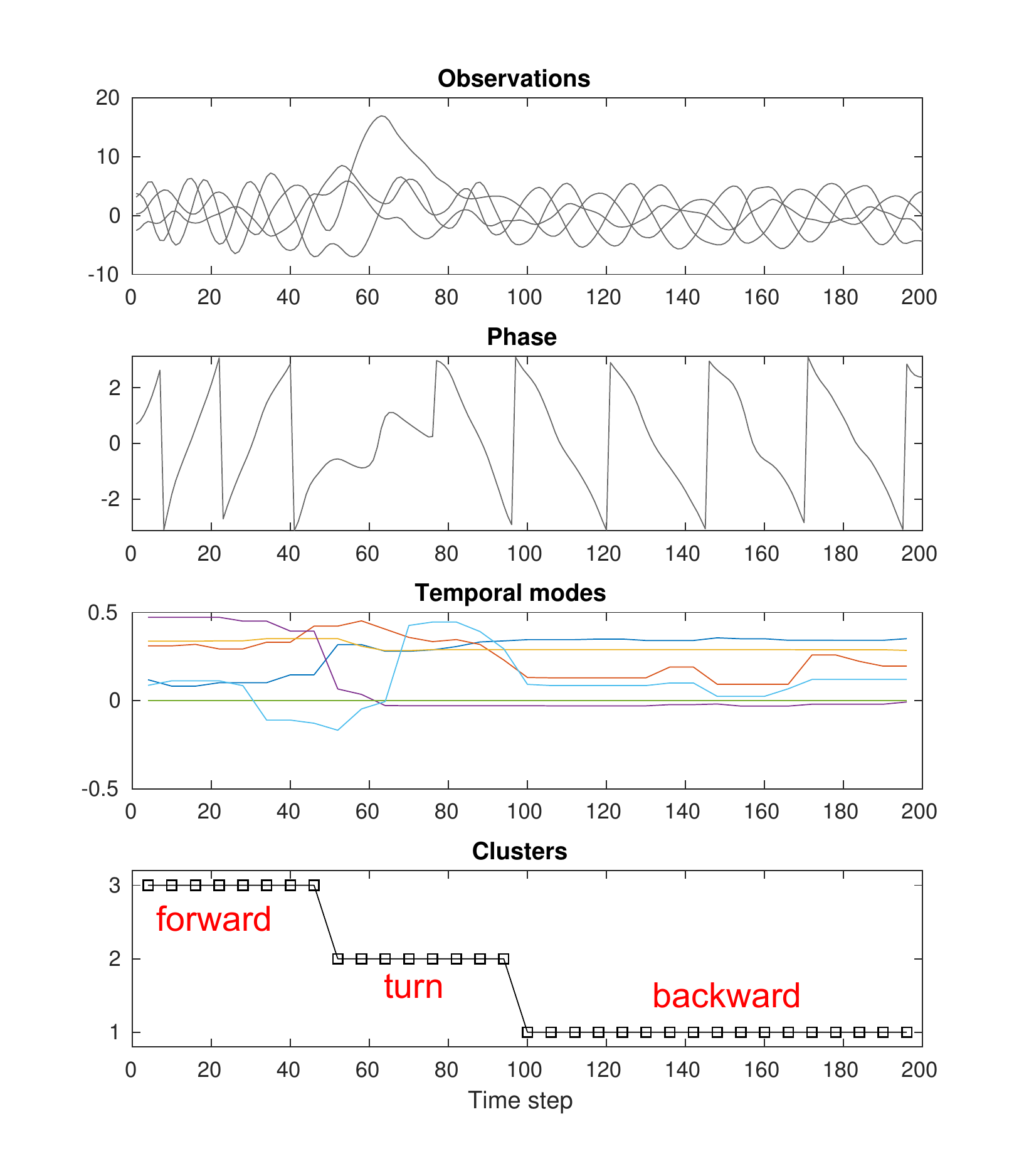}
  \caption{TVART applied to the worm behavior dataset.
    In this example, the worm begins moving forward, executes a turn, 
    and continues but in the backward direction.
    The temporal modes in TVART are able to pick out these three regimes,
    and a clustering of them identifies these three behavioral states.
  }
  \label{fig:worms}
\end{figure}

We analyze the escape response behavior of the nematode worm 
{\it Caenorhabditis elegans}
in response to a heat stimulus \citep{broekmans2016a,broekmans2016}.
These were used as test data for clustering via adaptive linear models
in the recent work of \cite{costa2019}.
Worm postural data were analyzed as smooth timeseries of
$N = 4$
``eigenworm'' principal components.
We ran TVART with an affine model and $R=6, M=6, \eta = 0.05$, and $\beta = 6$ on these data.
We also compared the performance of the code provided by \cite{costa2019}
at \url{https://github.com/AntonioCCosta/local-linear-segmentation}.

The results for worm 1 are shown in Figure~\ref{fig:worms}. 
We performed clustering on the system matrices as before with three clusters
and found that these clusters matched the three behaviors in the data:
forward crawling, a turn, and backward crawling.
These clusters are not trivial:
the data means during forward and backward motion are approximately equal, 
but the phase velocity switches.

Finally, we also compared our results to the code provided by
\cite{costa2019}, which fits an adaptive linear model and clusters the same timeseries;
the clustering results are essentially the same.
In terms of runtime, fitting a TVART model, performing clustering, and displaying the results
takes 23 s (90 iterates), versus 123 s for the code of \cite{costa2019}.
Thus, we see that our method is much faster than theirs,
while producing essentially the same clustering results.

\subsection{Dataset 2: sea surface temperature}

\begin{figure}[t]
  \centering
  \includegraphics[width=0.6\linewidth,trim={1cm 2cm 1.5cm .1cm},clip]{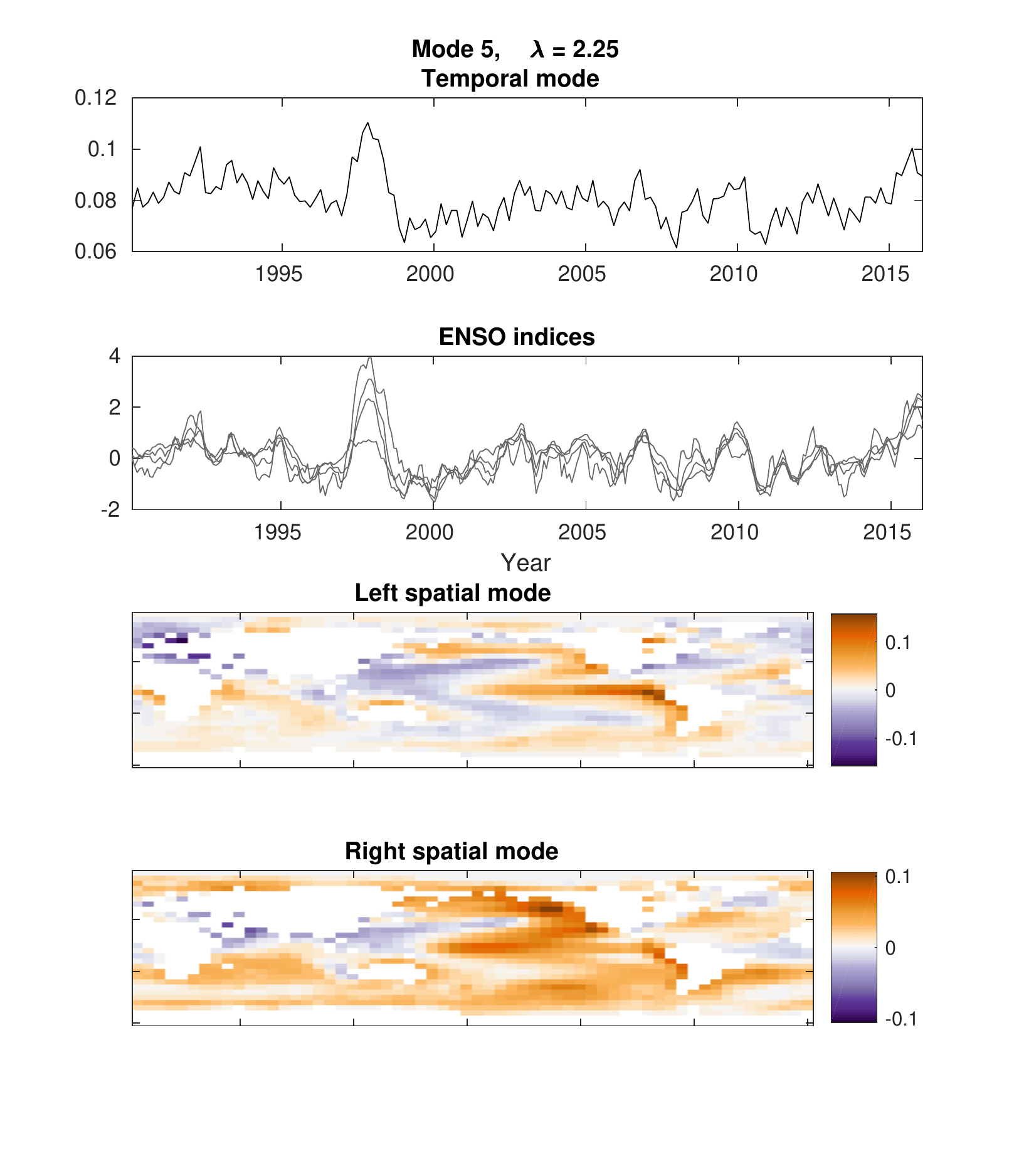}
  \caption{Mode corresponding to the ENSO 
    (El Ni\~no/Southern Oscillation) climate pattern 
    found with TVART.
    ENSO is associated with warm water in the equatorial Pacific Ocean.
  }
  \label{fig:sst}
\end{figure}

We applied our method to weekly sea surface 
temperature data from 1990 until present
\citep{reynolds2002}.
Sea surface temperature data contain oscillations with varying timescales,
from seasonal to multi-decadal.
This is also a very high-dimensional test for the TVART algorithm.

Weekly sea surface temperature grids
were downsampled by a factor of 6 in the latitudinal 
and longitudinal directions,
resulting in final vectors of length $N = 1259$.
We chose a window size of $M = 9$ weeks, 
resulting in $T = 171$ windows, and use parameters $\eta = 10^{-3}$, $\beta = 10^4$,
and the Spline regularizer.
Using our standard initialization, the ALS routine stagnates 
as the matrix $\U2$ converges very slowly.
However, we have found that restarting the algorithm after 10 iterations
and setting $\U2 = \U1$ in the new initialization speeds things up significantly.
This heuristic was inspired by the fact that the solution has $\U2 \approx \U1$.
In the end, the algorithm requires 1176 iterations to converge.

The leading modes that are output by the algorithm oscillate seasonally. 
However, we also find that mode 5, as ordered by $\ell_2$ energy, tracks 
the El Ni\~{n}o-Southern Oscillation (ENSO), Figure~\ref{fig:sst}.
The corresponding spatial modes show a plume of warm water in the 
central and eastern equatorial Pacific Ocean.
Warmer than average water in this location is the signature of ENSO.
Thus, TVART is able to discover an important dynamical feature in a large spatiotemporal dataset.



\subsection{Dataset 3: neural activity during a reaching task}

\begin{figure}[t]
  \centering
  \includegraphics[width=0.6\linewidth,trim={1cm 1cm 1cm .5cm},clip]{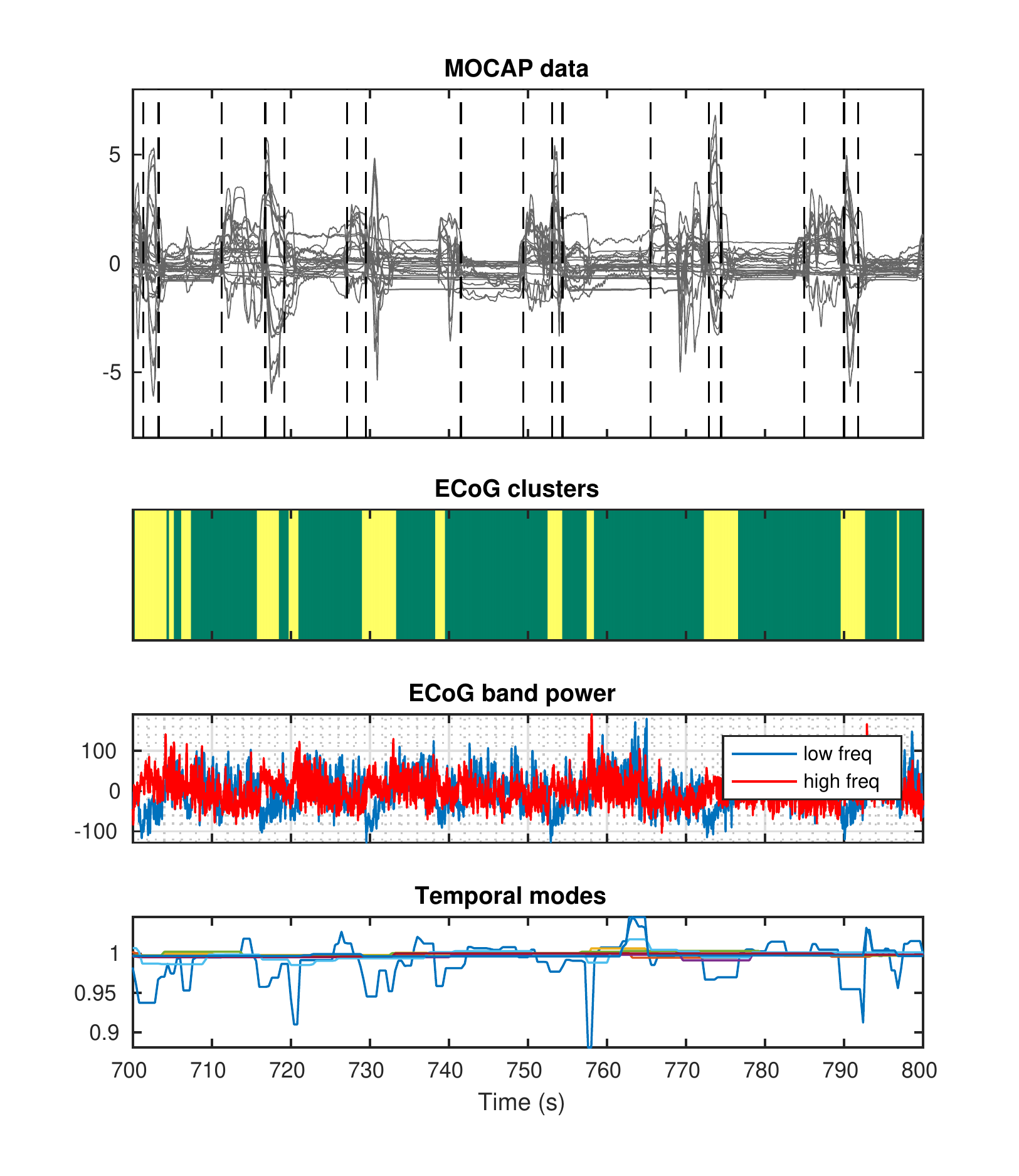}
  \caption{TVART applied to neural activity (ECoG) data captures movements.
    The dynamical clusters inferred by our method approximately correspond 
    to movement and rest states.
  }
  \label{fig:monkey}
\end{figure}

A second high-dimensional dataset comes from
electrocorticography recordings 
(ECoG)
of a Japanese macaque monkey {\it Macaca fuscata}
during a reaching task, provided by the NeuroTycho project \citep{chao2010}.
This is an invasive technology that measures brain activity 
using chronic electrodes placed below the skull and the dura mater.
During the task, the monkey makes repeated reaches towards food while its limbs
are tracked using motion capture and brain activity is recorded.

We analyzed the data of monkey K1, date 2009-05-25.
We ran TVART on the $N=64$ channel
ECoG voltage data after filtering out 50 Hz line noise,
downsampling to 500 Hz,
and standardization.
The TVART parameters were
$M = 200$, $R = 8$, $\eta = 1$, $\beta = 100$, and TV regularization,
resulting in $T = 1999$ windows of length 0.4 s.

Figure~\ref{fig:monkey} depicts the results:
We show motion capture data with dashed lines at the onset and offsets of
movements, using a changepoint detection procedure. 
We also highlight the result of clustering the TVART temporal modes, 
obtained from the brain activity alone, into two clusters.
The brain activity reveals two dominant modes, one of which is aligned to movement.
These movements are accompanied by an increase in high frequency (32-200 Hz)
and a decrease in low frequency (2-32 Hz) power.
The dominant TVART mode (show in blue) follows this spectral change in the timeseries.
TVART tracks this spectral change directly from the timeseries of electrode voltage.
Furthermore, the spatial modes associated with it are centered in the premotor region,
as depicted in Figure~\ref{fig:monkey_mode}.
Again, we observe that an important spatiotemporal feature has been extracted via
the TVART algorithm.

\begin{figure}[t]
  \centering
  \includegraphics[width=0.9\linewidth]{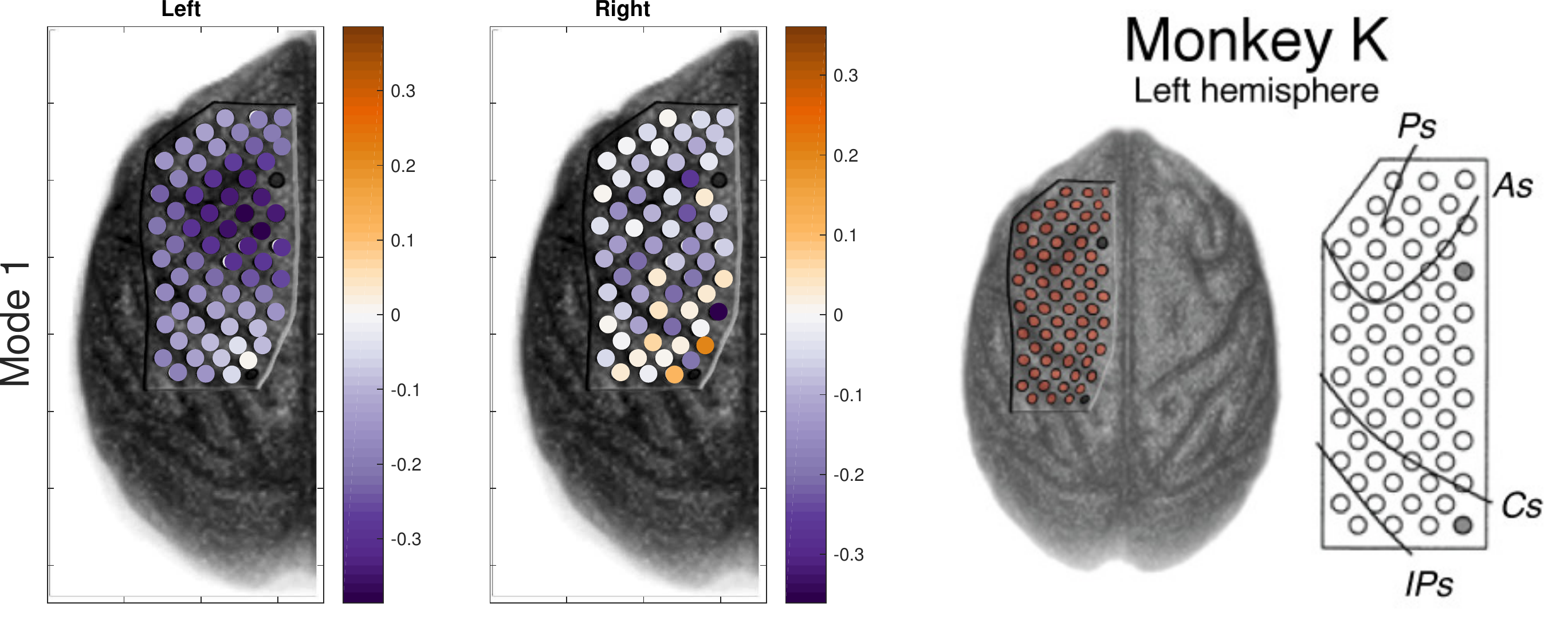}
  \caption{The dominant spatial modes (left and right) of TVART applied to the neural activity dataset.
    In the left mode, the electrodes with largest weight are centered in the premotor region
    known to be active during reaching,
    posterior of the arcuate sulcus and more medial than lateral.
    The right spatial mode is less interpretable.
    Data and brain images provided by NeuroTycho project \citep{chao2010}.
  }
  \label{fig:monkey_mode}
\end{figure}

\section{Conclusions}

We have presented a low rank tensor formulation of dynamical linear modeling
that we call {\it time-varying autoregression with low rank tensors}, or TVART. 
Our method offers the advantage of being able to scale to problems
with many state variables or many time points, 
as highlighted by our examples.
It also allows for incorporating prior knowledge of the temporal
structure of the dynamics via regularization,
which can lead to more stable and accurate reconstructions from less data.
We have shown that enforcing different kinds of temporal smoothness aids
in the identification of switching or slowly-varying dynamics.
We have also shown that TVART can be used with a variety of real datasets,
and the modes output by our method are often interpretable
as dominant dynamical features in the data.

There are limitations to TVART.
First, it has a number of hyperparameters,
such as the latent rank, stopping tolerances, and regularization strengths,
that must be tuned to the problem at hand.
While our method is generally more scalable than Bayesian approaches,
it suffers from fragility to hyperparameter choice.
In our experiments, it seems that with true underlying linear dynamics
(i.e.\ in the synthetic test cases), 
good results with rapid convergence may be obtained using a wide range of parameters.
On real datasets, tuning these took more effort, although here we limited our tuning
to the reuglarization parameters $\eta$ and $\beta$.

A second limitation is the potential sensitivity of TVART to initialization.
Due to non-convexity, there is no guarantee that Algorithm~\ref{alg:alt-min} converges
to a global optimum. 
In our experiments, we only found problems with this in the SST and ECoG datasets,
and adding noise to the initialization helped find better solutions.
We have verified that our results hold qualitatively across multiple random initializations,
as is standard with other tensor decompositions \citep{kolda2009}.
This is our recommended approach.

Finally, we have noted that the convergence sometimes stagnates for large problems,
in particular the SST and ECoG datasets.
This seems to affect the right spatial modes $\U2$ more severely than the others.
This may be due to the use of matrix-valued CG for that subproblem,
but it also seems likely that the subproblem for the right spatial modes 
is inherently more ill-conditioned with correlated data.
Multiresolution methods \citep{park2020} or explicit regularization of spatial modes
could improve convergence and produce more interpretable results
with large and spatially-correlated datasets.

Future work with TVART should investigate these and other possible extensions.
Higher-order autoregressive models \citep{west1997},
which incorporate multiple time delays stacked into Hankel matrices \citep{takens1981,gibson1992},
are a natural next step which would allow 
for wider applications, e.g.\ to economic data.
Theoretically, it would be good to have more understanding of what linear models
of nonlinear dynamics precisely find. 
It has recently been proven that some other non-convex factorization problems
have no spurious local minima \citep[e.g.][]{ge2017}.
Thus, it would be interesting to know whether there may exist similar results for
properly constrained CP decomposition models.
This might lead to modified versions of TVART with guaranteed optimality.

\section*{Acknowledgements}

Thank you to Scott Linderman and Nathan Kutz for discussions
and to Steven Peterson for help preprocessing the NeuroTycho data.
KDH was supported by a Washington Research Foundation Postdoctoral Fellowship.
BWB, RR, and KDH were supported by NSF CNS award 1630178 and
DARPA award FA8750-18-2-025.

{\small
\setlength{\bibsep}{0pt plus 0.3ex}
\bibliographystyle{siamplain}
\bibliography{library}
}

\appendix




\section{Code availability}
MATLAB code implementing TVART and instructions for running it are available
from 
\url{https://github.com/kharris/tvart}.

\section{Example application details}

\label{app:examples}

A complete table of parameters used for the test problems and datasets is given in Table~\ref{tab:datasets}.

\begin{table}[h!]
  \caption{TVART parameters}
  \label{tab:datasets}
  \centering
  \begin{tabular}{lllllllll}
    \toprule
    Problem name & $N$ & $M$ & $T$ & $R$ &$\eta$ & $\beta$ & $\mathcal{R}$ & affine? \\
    \midrule
    Switching linear & 6--4000 & 20 & 10 & 4, 6 & $1/N$ & 5 & TV & no \\
    Smoothly varying linear & 6--4000 & 1 & 160 & 4 & $6/N$ & 600 $\log^2_{10}(N)$ & Spline & no \\
    \href{https://github.com/AntonioCCosta/local-linear-segmentation}{Worm behavior} & 4 & 6 & 33 & 6 & 0.05 & 6 & TV & yes\\
    \href{https://www.esrl.noaa.gov/psd/repository/entry/show?entryid=12159560-ab82-48a1-b3e4-88ace20475cd}{Sea surface temperature} & 1259 & 9 & 171 & 6 & $10^{-3}$ & $10^4$ & Spline & no\\
    \href{http://neurotycho.org/food-tracking-task}{Neural activity} & 64 & 200 & 1999 & 8 & 1 & 100 & TV & no\\
    \bottomrule
  \end{tabular}
\end{table}

\end{document}